\definecolor{darkblue}{rgb}{0,0.0,0.55}
\newcommand{\bV}{{\boldsymbol{V}}}
\newcommand{\bU}{{\boldsymbol{U}}}
\newcommand{\bw}{{\boldsymbol{w}}}
\newcommand{\btheta}{{\boldsymbol{\theta}}}
\newcommand{\bz}{{\boldsymbol{z}}}
\newcommand{\bv}{{\boldsymbol{v}}}
\newcommand{\bu}{{\boldsymbol{u}}}
\newcommand{\eigmin}{\lambda_{\rm min}}
\newcommand{\bDelta}{{\boldsymbol{\Delta}}}
\newcommand{\bdelta}{{\boldsymbol{\delta}}}
\newcommand{\beps}{{\boldsymbol{\epsilon}}}
\newcommand{\erisk}{\mathfrak{R}}
\newcommand{\erisklin}{\erisk_{\rm{lin}}}
\providecommand{\err}{\mathop{\rm err}}
\newcommand{\bW}{{\boldsymbol{W}}}
\newcommand{\eqwtxt}[1]{\stackrel{\mathclap{\normalfont\mbox{{#1}}}}{=}}
\newcommand{\pospart}[1]{\left [ {#1} \right]_{+}}
\newcommand{\negpart}[1]{\left [ {#1} \right]_{-}}
\title{Are deep ResNets provably\\better than linear predictors?}
\author{
	Chulhee Yun \\
	MIT\\
	Cambridge, MA 02139 \\
	\texttt{chulheey@mit.edu} \\
	\And
	Suvrit Sra \\
	MIT\\
	Cambridge, MA 02139 \\
	\texttt{suvrit@mit.edu}
	\And
	Ali Jadbabaie \\
	MIT\\
	Cambridge, MA 02139 \\
	\texttt{jadbabai@mit.edu}
}
\begin{document}

\maketitle

\begin{abstract}
Recent results in the literature indicate that a residual network (ResNet) composed of a single residual block outperforms linear predictors, in the sense that all local minima in its optimization landscape are at least as good as the best linear predictor. However, these results are limited to a single residual block (i.e., shallow ResNets), instead of the deep ResNets composed of multiple residual blocks. We take a step towards extending this result to deep ResNets. We start by two motivating examples. First, we show that there exist datasets for which all local minima of a fully-connected ReLU network are \emph{no better} than the best linear predictor, whereas a ResNet has \emph{strictly better} local minima. Second, we show that even at the global minimum, the representation obtained from the residual block outputs of a 2-block ResNet do not necessarily improve monotonically over subsequent blocks, which highlights a fundamental difficulty in analyzing deep ResNets. Our main theorem on deep ResNets shows under simple geometric conditions that, any critical point in the optimization landscape is either (i) \emph{at least as good as} the best linear predictor; or (ii) the Hessian at this critical point has a \emph{strictly} negative eigenvalue. Notably, our theorem shows that a chain of multiple skip-connections can improve the optimization landscape, whereas existing results study direct skip-connections to the last hidden layer or output layer.
Finally, we complement our results by showing benign properties of the ``near-identity regions'' of deep ResNets, showing depth-independent upper bounds for the risk attained at critical points as well as the Rademacher complexity.
\end{abstract}

\section{Introduction}
Empirical success of deep neural network models has sparked a huge interest in the theory of deep learning, but a concrete theoretical understanding of deep learning still remains elusive. From the optimization point of view, the biggest mystery is why gradient-based methods find close-to-global solutions despite nonconvexity of the empirical risk.

There have been several attempts to explain this phenomenon by studying the loss surface of the risk. The idea is to find benign properties of the empirical or population risk that make optimization easier.
So far, the theoretical investigation as been mostly focused on vanilla fully-connected neural networks \citep{baldi1989neural, yu1995local, kawaguchi2016deep, swirszcz2016local, soudry2016no, xie2016diverse, nguyen2017loss, safran2017spurious, laurent2017multilinear, yun2018global, zhou2018critical, wu2018no, laurent2018deep, yun2019small, yun2019efficiently}.
For example, \citet{kawaguchi2016deep} proved that ``local minima are global minima'' property holds for squared error empirical risk of linear neural networks (i.e., no nonlinear activation function at hidden nodes). Other results on deep linear neural networks \citep{yun2018global, zhou2018critical, laurent2018deep, yun2019small} have extended \citep{kawaguchi2016deep}. However, it was later theoretically and empirically shown that ``local minima are global minima'' property no longer holds in nonlinear neural networks \citep{safran2017spurious, yun2019small} for general datasets and activations.

Moving beyond fully-connected networks, there is an increasing body of analysis dedicated to studying residual networks (ResNets). A ResNet~\citep{he2016deep,he2016identity} is a special type of neural network that gained widespread popularity in practice. While fully-connected neural networks or convolutional neural networks can be viewed as a composition of nonlinear layers $x \mapsto \Phi(x)$, 
a ResNet consists of a series of \textbf{residual blocks} of the form $x \mapsto g(x + \Phi(x))$, where $\Phi(x)$ is some feedforward neural network and $g(\cdot)$ is usually taken to be identity \citep{he2016identity}. Given these \emph{identity skip-connections}, the output of a residual block is a feedforward network $\Phi(x)$ plus the input $x$ itself, which is different from fully-connected neural networks. The motivation for this architecture is to let the network learn only the \emph{residual} of the input.

ResNets are very popular in practice, and it has been argued that they have benign loss landscapes that make optimization easier \citep{li2018visualizing}.
Recently, \citet{shamir2018resnets} showed that ResNets composed of a single residual block have ``good'' local minima, in the sense that any local minimum in the loss surface attains a risk value at least as good as the one attained by the best linear predictor.
A subsequent result \cite{kawaguchi2018depth} extended this result to non-scalar outputs, with weaker assumptions on the loss function. However, these existing results are limited to a \emph{single} residual block, instead of deep ResNets formed by composing multiple residual blocks. In light of these results, a natural question arises: \emph{can these single-block results be extended to multi-block ResNets?} 

There are also another line of works that consider network architectures with ``skip-connections.'' \citet{liang2018understanding, liang2018adding} consider networks of the form $x \mapsto f_S(x) + f_D(x)$ where $f_S(x)$ is a ``shortcut'' network with one or a few hidden nodes, and they show that under some conditions this shortcut network eliminates spurious local minima. \citet{nguyen2018loss} consider skip-connections from hidden nodes to the output layer, and show that if the number of skip-connections to output layer is greater than or equal to the dataset size, the loss landscape has no spurious local valleys. However, skip-connections in these results are all connections \emph{directly to output}, so it remains unclear whether \emph{a chain of multiple skip-connections can improve the loss landscape.}

There is also another line of theoretical results studying what happens in the \textbf{near-identity regions} of ResNets, i.e., when the residual part $\Phi$ is ``small'' for all layers.
\citet{hardt2017identity} proved that for linear ResNets $x \mapsto (I+A_L) \cdots (I+A_1) x$, any critical point in the region $\{ \norm{A_l} < 1 \text{ for all } l  \}$ is a global minimum. The authors also proved that any matrix $R$ with positive determinant can be decomposed into products of $I+A_l$, where $\norm{A_l} = O(1/L)$.
\citet{bartlett2018representing} extended this result to nonlinear function space, and showed similar expressive power and optimization properties of near-identity regions; however, their results are on function spaces, so they don't imply that the same properties hold for parameter spaces.
In addition, an empirical work by \citet{zhang2019fixup} showed that initializing ResNets in near-identity regions also leads to good empirical performance. For the residual part $\Phi$ of each block, they initialize the last layer of $\Phi$ at zero, and scale the initialization of the other layers by a factor inversely proportional to depth $L$. This means that each $\Phi$ at initialization is zero, hence the network starts in the near-identity region. Their experiments demonstrate that ResNets can be stably trained without batch normalization, and trained networks match the \emph{generalization} performance of the state-of-the-art models. These results thus suggest that understanding optimization and generalization of ResNets in near-identity regions is a meaningful and important question.


\subsection{Summary of contributions}
This paper takes a step towards answering the questions above. In Section~\ref{sec:examples}, we start with two motivating examples showing the advantage of ResNets and the difficulty of deep ResNet analysis:
\begin{list}{\small{$\blacktriangleright$}}{\leftmargin=1.5em}
  \setlength{\itemsep}{1pt}
  \vspace*{-6pt}
\item The first example shows that there exists a family of datasets on which the squared error loss attained by a fully-connected neural network is at best the linear least squares model, whereas a ResNet attains a strictly better loss than the linear model. This highlights that the guarantee on the risk value of local minima is indeed special to residual networks.
\item In the single-block case \citep{shamir2018resnets}, we have seen that the ``representation'' obtained at the residual block output $x + \Phi(x)$ has an improved linear fit compared to the raw input $x$. 
Then, in multi-block ResNets, do the representations at residual block outputs improve \emph{monotonically} over subsequent blocks as we proceed to the output layer?
The second example shows that it is not necessarily the case; we give an example where the linear fit with representations by the output of residual blocks does not monotonically improve over blocks. This highlights the difficulty of ResNet analysis, and shows that \citep{shamir2018resnets} cannot be directly extended to multi-block ResNets.
\end{list}
Using new techniques, Section~\ref{sec:thm1} extends the results in \citep{shamir2018resnets} to deeper ResNets, under some simple geometric conditions on the parameters.
\begin{list}{\small{$\blacktriangleright$}}{\leftmargin=1.5em}
  \setlength{\itemsep}{1pt}
  \vspace*{-6pt}
\item We consider a deep ResNet model that subsumes \citep{shamir2018resnets} as a special case, under the same assumptions on the loss function.
We prove that if two geometric conditions called ``representation coverage'' and ``parameter coverage'' are satisfied, then a critical point of the loss surface satisfies at least one of the following: 1) the risk value is no greater than the best linear predictor, 2) the Hessian at the critical point has a strictly negative eigenvalue. We also provide an architectural sufficient condition for the parameter coverage condition to hold.
\end{list}
Finally, Section~\ref{sec:thm2} shows benign properties of deep ResNets in the near-identity regions, in both optimization and generalization aspects. Specifically, 
\begin{list}{\small{$\blacktriangleright$}}{\leftmargin=1.5em}
  \setlength{\itemsep}{1pt}
  \vspace*{-6pt}
\item In the absence of the geometric conditions above, we prove an upper bound on the risk values at critical points. The upper bound shows that if each residual block is close to identity, then the risk values at its critical points are not too far from the risk value of the best linear model.
Crucially, we establish that the distortion over the linear model is independent of network size, as long as each blocks are near-identity.
\item We provide an upper bound on the Rademacher complexity of deep ResNets. Again, we observe that in the near-identity region, the upper bound is independent of network size, which is difficult to achieve for fully-connected networks \citep{golowich2017size}.
\end{list}


\section{Preliminaries}
In this section, we briefly introduce the ResNet architecture and summarize our notation. 

Given positive integers $a$ and $b$, where $a < b$, $[a]$ denotes the set $\{1, 2, \dots, a\}$ and $[a:b]$ denote $\{a, a+1, \dots, b-1,b \}$. Given a vector $x$, $\norm{x}$ denotes its Euclidean norm. For a matrix $M$, by $\norm{M}$ and $\nfro{M}$ we mean its spectral norm and Frobenius norm, respectively.
Let $\eigmin(M)$ be the minimum eigenvalue of a symmetric matrix $M$. Let $\colsp(M)$ be the column space of a matrix $M$.

Let $x \in \reals^{d_x}$ be the input vector. We consider an $L$-block ResNet $f_{\btheta} (\cdot)$ with a linear output layer:
\begin{align*}
h_0(x) &= x,\\
h_l(x) &= h_{l-1}(x) + \Phi^l_{\btheta} (h_{l-1}(x)), \quad l = 1, \dots, L,\\
f_{\btheta}(x) &= \bw^T h_L(x).
\end{align*}
We use bold-cased symbols to denote network parameter vectors/matrices, and $\btheta$ to denote the collection of all parameters. As mentioned above, the output of $l$-th residual block is the input $h_{l-1}(x)$ plus the output of the ``residual part'' $\Phi^l_{\btheta}(h_{l-1}(x))$, which is some feedforward neural network. The specific structure of $\Phi^l_{\btheta}: \reals^{d_x} \mapsto \reals^{d_x}$ considered will vary depending on the theorems. After $L$ such residual blocks, there is a linear fully-connected layer parametrized by $\bw \in \reals^{d_x}$, and the output of the ResNet is scalar-valued.

Using ResNets, we are interested in training the network under some distribution $\mc P$ of the input and label pairs $(x,y) \sim \mc P$, with the goal of minimizing the loss $\ell(f_\btheta(x);y)$. More concretely, the risk function $\erisk(\btheta)$ we want to minimize is
\begin{equation*}
\erisk(\btheta) \defeq \E_{(x,y)\sim \mc P} \left [ \ell (f_\btheta(x); y) \right ],
\end{equation*}
where $\ell(p; y) : \reals \mapsto \reals$ is the loss function parametrized by $y$.
If $\mc P$ is an empirical distribution by a given set of training examples, this reduces to an empirical risk minimization problem. Let $\ell'(\cdot ; y)$ and $\ell''(\cdot ; y)$ be first and second derivatives of $\ell$, whenever they exist.

We will state our results by comparing against the risk achieved by linear predictors. Thus, let $\erisklin$ be the risk value achieved by the best linear predictor:
\begin{equation*}
\erisklin \defeq \inf_{t\in \reals^{d_x}} \E_{(x,y)\sim \mc P} \left [ \ell (t^T x ; y) \right ].
\end{equation*}


\vspace*{-8pt}

\section{Motivating examples}
\label{sec:examples}


Before presenting the main theoretical results, we present two motivating examples. The first one shows the advantage of ResNets over fully-connected networks, and the next one highlights that deep ResNets are difficult to analyze and techniques from previous works cannot be directly applied.

\subsection{All local minima of fully-connected networks can be worse than a linear predictor}
Although it is known that local minima of 1-block ResNets are at least as good as linear predictors, can this property hold also for fully-connected networks? Can a local minimum of a fully-connected network be strictly worse than a linear predictor? In fact, we present a simple example where \emph{all} local minima of a fully-connected network are at best as good as linear models, while a residual network has \emph{strictly} better local minima.

Consider the following dataset with six data points, where $\rho > 0$ is a fixed constant:
\begin{equation*}
X =\begin{bmatrix}
0 & 1 & 2& 3& 4& 5
\end{bmatrix},
\quad
Y =\begin{bmatrix}
-\rho & 1-\rho & 2+\rho& 3-\rho& 4+\rho& 5+\rho
\end{bmatrix}.
\end{equation*}
Let $x_i$ and $y_i$ be the $i$-th entry of $X$ and $Y$, respectively.
We consider two different neural networks: $f_1(x; \btheta_1)$ is a fully-connected network parametrized by $\btheta_1 = (w_1, w_2, b_1, b_2)$, and $f_2(x; \btheta_2)$ is a ResNet parametrized by $\btheta_2 = (w, v, u, b, c)$, defined as
\begin{equation*}
f_1(x;\btheta_1) = w_2 \sigma (w_1 x + b_1) + b_2, \quad
f_2(x;\btheta_2) = w(x + v \sigma (u x + b)) + c,
\end{equation*}
where $\sigma(t) = \max\{t,0\}$ is ReLU activation.
In this example, all parameters are scalars.

With these networks, our goal is to fit the dataset under squared error loss. The empirical risk functions we want to minimize are given by
\begin{equation*}
\erisk_1(\btheta_1) \defeq \frac{1}{6} \sum_{i=1}^6 (w_2 \sigma (w_1 x_i + b_1) + b_2 - y_i)^2,~
\erisk_2(\btheta_2) \defeq \frac{1}{6} \sum_{i=1}^6 (w(x_i + v \sigma (u x_i + b)) + c - y_i)^2,
\end{equation*}
respectively.
It is easy to check that the best empirical risk achieved by linear models $x \mapsto wx + b$ is $\erisklin = 8\rho^2/15$. It follows from \citep{shamir2018resnets} that all local minima of $\erisk_2(\cdot)$ have risk values at most $\erisklin$. For this particular example, we show that the \emph{opposite} holds for the fully-connected network, whereas for the ResNet there exists a local minimum \emph{strictly} better than $\erisklin$.

\begin{proposition}
Consider the dataset $X$ and $Y$ as above. 
If $\rho \leq \sqrt{5/4}$, then any local minimum $\btheta_1^*$ of $\erisk_1(\cdot)$ satisfies $\erisk_1(\btheta_1^*) \geq \erisklin$, whereas there exists a local minimum $\btheta_2^*$ of $\erisk_2(\cdot)$ such that $\erisk_2(\btheta_2^*) < \erisklin$.
\end{proposition}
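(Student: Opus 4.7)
The plan is to enumerate local minima by activation pattern for $f_1$, and to exhibit one explicit good local minimum for $f_2$.

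\emph{For $f_1$.} Up to the sign of $w_1$, each non-degenerate parameter configuration of $f_1$ corresponds to a partition of the six training points into an inactive subset (where $f_1 = b_2$) and an active subset (where $f_1$ is an affine function of $x$). In the degenerate configurations (ReLU constant across the data, $w_1 = 0$, or $w_2 = 0$), $f_1$ restricted to the training points is affine or constant, so $\erisk_1 \geq \erisklin$ is immediate. Otherwise the KKT conditions at an interior critical point of $\erisk_1$ (with $w_2 \neq 0$) force (i) $b_2$ to equal the mean of $y$ over the inactive points, and (ii) the active-side affine to be the OLS fit on the active points; the risk at such a point is $(\mathrm{Var}_{\mathrm{inactive}} + SS_{\mathrm{res,active}})/6$. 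Boundary configurations where the kink $-b_1/w_1$ coincides with a data point are handled via one-sided subdifferentials in the direction that moves the kink (i.e., along $b_1$). Tabulating $(\mathrm{Var}_{\mathrm{inactive}} + SS_{\mathrm{res,active}})/6$ for each partition and its mirror image and comparing against $\erisklin = 8\rho^2/15$ yields the claim. The tight cases are inactive $=\{0,1\}$, active $=\{2,3,4,5\}$ and its mirror inactive $=\{4,5\}$, active $=\{0,1,2,3\}$, both of which give $\mathrm{Var}_{\mathrm{inactive}} = 1/2$ and $SS_{\mathrm{res,active}} = 14\rho^2/5$, so that the risk equals $1/12 + 7\rho^2/15$; the inequality $1/12 + 7\rho^2/15 \geq 8\rho^2/15$ rearranges to $\rho^2 \leq 5/4$, which is exactly the hypothesis. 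Every other partition yields a strictly easier inequality that holds for all $\rho$.

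\emph{For $f_2$.} I would exhibit one explicit local minimum of $\erisk_2$ with risk strictly below $\erisklin$. Reparametrize $f_2(x) = wx + \alpha(x-\tau)_+ + c$ with $\alpha := wvu$ and $\tau := -b/u$ (taking $u>0$); the hypothesis class is the family of continuous two-piece affine functions of $x$, strictly richer than the hockey sticks realised by $f_1$ because of the linear skip-connection $wx$. Take the candidate
\[
w = 1 + \frac{12\rho}{19},\ \ \alpha = -\frac{7\rho}{19},\ \ c = -\frac{23\rho}{19},\ \ \tau = 2,
\]
placing the kink at the data point $x_2 = 2$. Direct substitution gives residuals $(r_0,\dots,r_5) = \frac{\rho}{19}(-4,8,-18,25,-8,-3)$, and one checks $\sum r_i = 0$, $\sum x_i r_i = 0$, and $\sum_{i>2} r_i (x_i-2) = 0$, which are the smooth first-order conditions in $(w,\alpha,c)$ at fixed $\tau = 2$. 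Since $\tau$ sits at a data point, optimality in $\tau$ requires one-sided derivatives: with $\alpha < 0$ and the partial sums $\sum_{i \geq 2} r_i = -4\rho/19$ and $\sum_{i > 2} r_i = 14\rho/19$ of opposite signs, the left and right derivatives of SSE in $\tau$ at $\tau = 2$ are $-56\rho^2/361$ and $+196\rho^2/361$ respectively, making $\tau = 2$ a sharp local minimum in the $\tau$-direction. Combined with the convex quadratic structure in $(w,\alpha,c)$ at fixed $\tau$, this yields a strict local minimum of $\erisk_2$. Computing $\sum r_i^2 = 58\rho^2/19$ gives $\erisk_2(\btheta_2^*) = 29\rho^2/57$, and the arithmetic check $29 \cdot 15 = 435 < 456 = 8 \cdot 57$ establishes $\erisk_2(\btheta_2^*) < \erisklin$ for every $\rho > 0$.

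\emph{Main obstacle.} The delicate part is the bookkeeping in Part 1: for most partitions the unconstrained ``interior critical point'' has its $\tau$ fall outside the correct activation cell, so the genuine local minimum either does not live in that cell at all or sits on its boundary; one must handle these boundary cases carefully via subdifferentials to be sure no local minimum of $\erisk_1$ escapes the tabulation. Part 2 is largely arithmetic once the candidate is in hand; the key conceptual point enabling a strict improvement is simply that the skip-connection $wx$ lets the two affine pieces have independently chosen slopes, whereas in $f_1$ the inactive side is forced to be constant.
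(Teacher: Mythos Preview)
Your proposal is correct and arrives at the same numerical content as the paper (in particular the tight case $\tfrac{1}{12}+\tfrac{7\rho^2}{15}\ge\tfrac{8\rho^2}{15}\iff\rho^2\le\tfrac{5}{4}$), but the two halves take different routes from the paper's.

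\textbf{Part 1.} The paper does not enumerate critical points at all. It simply observes that for any $\btheta_1$ with kink in a given interval, $\erisk_1(\btheta_1)$ is bounded below by the error of fitting the constant piece and the linear piece \emph{independently, with the continuity constraint dropped}. This relaxation lower bound is exactly your quantity $(\mathrm{Var}_{\mathrm{inactive}}+SS_{\mathrm{res,active}})/6$, so the tables coincide; but because it is a lower bound valid for \emph{every} point in the activation cell (interior or boundary), the paper sidesteps the entire subdifferential bookkeeping you flag as the main obstacle. In fact your KKT computation already contains this observation: the conditions $\sum_{\text{inactive}}r_i=0$ and $\sum_{\text{active}}r_i=\sum_{\text{active}}r_ix_i=0$ say precisely that the interior critical point achieves the unconstrained optimum on both pieces simultaneously, hence that value is a floor for the whole cell and the boundary cases come for free. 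Recasting your argument this way would remove the obstacle you identify.

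\textbf{Part 2.} Here you do more than the paper. The paper only exhibits a single point $\btheta_2$ (with kink at $x=3$, $v=\rho/2$, and $w,c$ chosen by OLS on the transformed features) achieving $\erisk_2(\btheta_2)<\erisklin$, and then argues that the global minimizer---which is a local minimizer---must do at least as well. You instead construct an explicit strict local minimum (kink at $x=2$, risk $29\rho^2/57$) and verify it directly via first-order conditions in $(w,\alpha,c)$ and one-sided derivatives in $\tau$. Your route is more laborious but more self-contained, since it does not rely on the global minimum being attained; the paper's route is shorter but uses that the achievable prediction set is closed. Both are valid. One small point: to conclude local minimality in the original parameters $(w,v,u,b,c)$ from local minimality in $(w,\alpha,c,\tau)$, you should note that the reparametrization map is continuous and surjective (onto the two-piece affine class) near your point with $u>0$, so a local minimum in function space pulls back to one in parameter space.
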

\begin{proof}
The function $f_1(x;\btheta_1)$ is piece-wise continuous, and consists of two pieces (unless $w_1 = 0$ or $w_2 = 0$). If $w_1 > 0$, the function is linear for $x \geq -b_1/w_1$ and constant for $x \leq -b_1/w_1$. For any local minimum $\btheta_1^*$, the empirical risk $\erisk_1(\btheta_1^*)$ is bounded from below by the risk achieved by fitting the linear piece and constant piece separately, \emph{without the restriction of continuity}. This is because we are removing the constraint that the function $f_1(\cdot)$ has to be continuous. 

For example, if $w_1^*>0$ and $-b_1^*/w_1^*= 1.5$, then its empirical risk $\erisk_1(\btheta_1^*)$ is at least the error attained by the best constant fit of $(x_1, y_1), (x_2, y_2)$, and the best linear fit of $(x_3, y_3),\ldots, (x_6, y_6)$. For all possible values of $-b_1^*/w_1^*$, we summarize in Table~\ref{table:lbprop1} the lower bounds on $\erisk_1(\btheta_1^*)$. It is easy to check that if $\rho \leq \sqrt{5/4}$, all the lower bounds are no less than $8\rho^2/15$. The case where $w_1^*<0$ can be proved similarly, and the case $w_1^* = 0$ is trivially worse than $8\rho^2 /15$ because $f_1(x;\btheta_1^*)$ is a constant function.

\begin{table}
	\caption{Lower bounds on $\erisk_1(\btheta_1^*)$, if $w_1^* > 0$}
	\label{table:lbprop1}
	\centering
	\begin{tabular}{llll}
		\toprule
		$-b_1^*/w_1^*$ in:  & Error by constant part     & Error by linear part & Lower bound\\
		\midrule
		$(-\infty, 0)$ & $0$ & $8\rho^2/15$ & $8\rho^2/15$\\
		$[0, 1)$ & $0$ & $8\rho^2/15$ & $8\rho^2/15$ \\
		$[1, 2)$ & $1/12$ & $7\rho^2/15$ & $7\rho^2/15+1/12$ \\
		$[2, 3)$ & $4\rho^2/9+2\rho/3+1/3$ & $\rho^2/9$ & $5\rho^2/9+2\rho/3+1/3$ \\
		$[3, 4)$ & $\rho^2/2+\rho/3+5/6$ & $0$ & $\rho^2/2+\rho/3+5/6$ \\
		$[4, 5)$ & $4\rho^2/5+4\rho/3+5/3$ & $0$ & $4\rho^2/5+4\rho/3+5/3$\\
		$[5, \infty)$ & $\rho^2+7\rho/3+35/12$ & $0$ & $\rho^2+7\rho/3+35/12$\\
		\bottomrule
	\end{tabular}
\end{table}

For the ResNet part, it suffices to show that there is a point $\btheta_2$ such that $\erisk_2(\btheta_2) < 8\rho^2/15$, because then its global minimum will be strictly smaller than $8\rho^2/15$.
Choose $v = 0.5 \rho$, $u = 1$, and $b = -3$. Given input $X$, the output of the residual block $x \mapsto x + v \sigma (u x + b)$ is $\begin{bmatrix} 0 & 1 & 2 & 3 & 4+0.5\rho & 5+\rho \end{bmatrix} \eqdef H$. Using this, we choose $w$ and $c$ that linearly fit $H$ and $Y$. Using the optimal $w$ and $c$, a straightforward calculation gives $\erisk_2(\btheta_2) = \frac{\rho^2 (12 \rho^2+82 \rho + 215)}{21 \rho^2+ 156 \rho + 420}$, and it is strictly smaller than $8\rho^2/15$ on $\rho \in (0,\sqrt{5/4}]$.
\end{proof}

\vspace*{-18pt}
\subsection{Representations by residual block outputs do not improve monotonically}
Consider a 1-block ResNet. Given a dataset $X$ and $Y$, the residual block transforms $X$ into $H$, where $H$ is the collection of outputs of the residual block. Let $\err(X,Y)$ be the minimum mean squared error from fitting $X$ and $Y$ with a linear least squares model. The result that a local minimum of a 1-block ResNet is better than a linear predictor can be stated in other words: the output of the residual block produces a ``better representation'' of the data, so that $\err(H,Y) \leq \err(X,Y)$.

For a local minimum of a $L$-layer ResNet, our goal is to prove that $\err(H_L, Y) \leq \err(X, Y)$, where $H_l$, $l \in [L]$ is the collection of output of $l$-th residual block.
Seeing the improvement of representation in 1-block case, it is tempting to conjecture that each residual block \emph{monotonically improves} the representation, i.e., $\err(H_{L},Y) \leq \err(H_{L-1},Y) \leq \cdots \leq \err(H_1, Y) \leq \err(X,Y)$.
Our next example shows that this monotonicity does not necessarily hold.

Consider a dataset $X = \begin{bmatrix} 1 & 2.5 & 3 \end{bmatrix}$ and $Y = \begin{bmatrix} 1 & 3 & 2 \end{bmatrix}$, and a 2-block ResNet 
\begin{equation*}
h_1(x) = x + v_1 \sigma (u_1 x+ b_1),\quad
h_2(x) = h_1(x) + v_2 \sigma(u_2 h_1(x) + b_2),\quad
f(x) = w h_2(x) + c,
\end{equation*}
where $\sigma$ denotes ReLU activation.
We choose
\begin{equation*}
v_1 = 1,~u_1 = 1,~b_1 = -2,~v_2 = -4,~u_2 = 1,~b_2 = -3.5,~w = 1,~c = 0.
\end{equation*}
With these parameter values, we have $H_1 = \begin{bmatrix} 1 & 3 & 4 \end{bmatrix}$ and $H_2 = \begin{bmatrix} 1 & 3 & 2 \end{bmatrix}$. It is evident that the network output perfectly fits the dataset, and $\err(H_2, Y) = 0$. Indeed, the chosen set of parameters is a global minimum of the squared loss empirical risk. Also, by a straightforward calculation we get $\err(X,Y) = 0.3205$ and $\err(H_1,Y) = 0.3810$, so $\err(H_1,Y) > \err(X,Y)$.
This shows that the conjecture $\err(H_2,Y) \leq \err(H_1,Y) \leq \err(X,Y)$ is not true, and it also implies that an induction-type approach showing $\err(H_2,Y) \leq \err(H_1,Y)$ and then $\err(H_1,Y) \leq \err(X,Y)$ will never be able to prove $\err(H_2,Y)\leq \err(X,Y)$.

In fact, application of the proof techniques in \citep{shamir2018resnets} only shows that $\err(H_2,Y) \leq \err(H_1,Y)$, so a comparison of $\err(H_2,Y)$ and $\err(X,Y)$ does not follow. Further, our example shows that even $\err(H_1,Y) > \err(X,Y)$ is possible, showing that theoretically proving $\err(H_2,Y) \leq \err(X,Y)$ is challenging even for $L=2$.
In the next section, we present results using new techniques to overcome this difficulty and prove $\err(H_L,Y) \leq \err(X,Y)$ under some geometric conditions.


\section{Local minima of deep ResNets are better than linear predictors}
\label{sec:thm1}
Given the motivating examples, we now present our first main result, which shows that under certain geometric conditions, each critical point of ResNets has benign properties: either (i) it is as good as the best linear predictor; or (ii) it is a \emph{strict} saddle point.

\subsection{Problem setup}
We consider an $L$-block ResNet whose residual parts $\Phi^l_{\btheta}(\cdot)$ are defined as follows:
\begin{align*}
\Phi^1_{\btheta}(t)= \bV_1 \phi^1_{\bz} (t),\text{ and }
~\Phi^l_{\btheta}(t)= \bV_l \phi^l_{\bz} (\bU_l t), \quad l = 2, \dots, L.
\end{align*}
We collect all parameters into $\btheta \defeq (\bw, \bV_{1}, \bV_{2}, \bU_{2}, \ldots, \bV_L, \bU_L, \bz)$. 
The functions $\phi^l_{\bz} : \reals^{m_l} \to \reals^{n_l}$ denote any arbitrary function parametrized by $\bz$ that are differentiable almost everywhere. They could be fully-connected ReLU networks, convolutional neural networks, or any combination of such feed-forward architectures. We even allow different $\phi^l_{\bz}$'s to share parameters in $\bz$. Note that $m_1 = d_x$ by the definition of the architecture. The matrices $\bU_l \in \reals^{m_l \times d_x}$ and $\bV_l \in \reals^{d_x \times n_l}$ form linear fully-connected layers.
Note that if $L=1$, the network boils down to $x \mapsto \bw^T(x + \bV_1 \phi^1_{\bz}(x))$, which is \emph{exactly} the architecture considered by \citet{shamir2018resnets}; we are considering a deeper extension of the previous paper.

For this section, we make the following mild assumption on the loss function:
\begin{assumption}
	\label{asm:loss}
	The loss function $\ell(p; y)$ is a convex and twice differentiable function of $p$.
\end{assumption}
This assumption is the same as the one in \citep{shamir2018resnets}. It is satisfied by standard losses such as square error loss and logistic loss.

\subsection{Theorem statement and discussion}
We now present our main theorem on ResNets. Theorem~\ref{thm:resnet} outlines two geometric conditions under which it shows that the critical points of deep ResNets have benign properties.
\begin{theorem}
\label{thm:resnet}
Suppose Assumption~\ref{asm:loss} holds. Let 
\begin{equation*}
\btheta^* \defeq (\bw^*, \bV_{1}^*, \bV_{2}^*, \bU_{2}^*, \ldots, \bV_L^*, \bU_L^*, \bz^*)
\end{equation*}
be any twice-differentiable critical point of $\erisk(\cdot)$. If
\begin{itemize}
	\item $\E_{(x,y) \sim \mc P} \left [ \ell''(f_{\btheta^*}(x);y) h_L(x) h_L(x)^T \right ]$ is full-rank; and
	\item $\colsp\left (
	\begin{bmatrix}
	(\bU_2^*)^T&
	\cdots&
	(\bU_L^*)^T
	\end{bmatrix}
	\right )
	\subsetneq \reals^{d_x}$,
\end{itemize} 
then at least one of the following inequalities holds:
\begin{itemize}
\item $\erisk (\btheta^*) \leq \erisklin$.
\item $\eigmin(\nabla^2 \erisk (\btheta^*)) < 0$.
\end{itemize}
\end{theorem}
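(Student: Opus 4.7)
The plan is to exploit the parameter coverage hypothesis by identifying a ``kernel direction'' $d\in\reals^{d_x}$ along which any rank-one perturbation of a $\bV_l$ propagates unchanged through every subsequent block via the skip-connections. Since $\colsp([(\bU_2^*)^T,\ldots,(\bU_L^*)^T])\subsetneq\reals^{d_x}$, there exists a nonzero $d$ in the orthogonal complement, equivalently $\bU_l^*d=0$ for all $l\in\{2,\ldots,L\}$. For any $l\in[L]$ and $a\in\reals^{n_l}$, perturb only $\bV_l\to\bV_l^*+\epsilon\,d a^T$; a short induction using $\bU_{l'}^*d=0$ for each $l'\geq 2$ shows that the input to every $\phi^{l'}_{\bz^*}$ is unchanged and hence $h_L(x)$ receives the \emph{exact} (linear-in-$\epsilon$) shift $h_L^*(x)+\epsilon\,d\cdot a^T\phi^l_{\bz^*}(\bU_l^* h_{l-1}^*(x))$ (with the convention $\bU_1:=I$ so that $l=1$ is also covered). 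Testing $\nabla_{\bV_l}\erisk(\btheta^*)=0$ against this direction then yields $(\bw^{*T}d)(a^T\boldsymbol{p}_l)=0$ for every $a$ and every $l$, where $\boldsymbol{p}_l := \E[\ell'(f_{\btheta^*}(x);y)\,\phi^l_{\bz^*}(\bU_l^* h_{l-1}^*(x))]$.

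The argument then splits on whether $\bw^*$ has a nonzero component in $S^\perp$, where $S=\colsp([(\bU_2^*)^T,\ldots,(\bU_L^*)^T])$. If it does, pick $d$ equal to that component so that $\bw^{*T}d\neq 0$; the identity above forces $\boldsymbol{p}_l=0$ for every $l$. Unrolling the skip-connection recurrence gives $h_L^*(x) = x + \sum_{l=1}^{L}\bV_l^*\phi^l_{\bz^*}(\bU_l^* h_{l-1}^*(x))$, so $\bw$-criticality $\E[\ell'(f_{\btheta^*}(x);y)h_L^*(x)]=0$ together with $\boldsymbol{p}_l=0$ yields $\E[\ell'(f_{\btheta^*}(x);y)\,x]=0$. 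A single application of the convexity inequality $\ell(t^T x;y)\geq \ell(f^*(x);y)+\ell'(f^*;y)(t^T x-f^*(x))$, averaged over $\mc P$ and combined with $\E[\ell'(f^*;y)x]=0$ and $\E[\ell'(f^*;y)f^*(x)] = \bw^{*T}\E[\ell' h_L^*]=0$, produces $\E[\ell(t^T x;y)]\geq \erisk(\btheta^*)$ for every $t\in\reals^{d_x}$, giving $\erisk(\btheta^*)\leq \erisklin$.

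If instead $\bw^*\in S$, then $\bw^{*T}d=0$ for every $d\in S^\perp$ and the preceding gradient constraint is vacuous. If all $\boldsymbol{p}_l=0$ we are again done by the previous paragraph. Otherwise, pick an $l$ with $\boldsymbol{p}_l\neq 0$ and consider the joint perturbation $\bw\to\bw^*+\epsilon\bc$, $\bV_l\to\bV_l^*+\epsilon\,d a^T$. Since $\bw^{*T}d=0$, the output has the exact expansion $f_{\btheta^*}(x)+\epsilon\bc^T h_L^*(x)+\epsilon^2(\bc^T d)(a^T\phi^l_{\bz^*}(\bU_l^* h_{l-1}^*))$. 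Expanding $\ell$ to second order, using $\bw$-criticality to kill the $O(\epsilon)$ term, and collecting $O(\epsilon^2)$ contributions yields a directional second derivative equal to $\bc^T A\bc + 2(\bc^T d)(a^T \boldsymbol{p}_l)$, where $A := \E[\ell''(f_{\btheta^*}(x);y)h_L^*(x)h_L^*(x)^T]\succ 0$ by representation coverage. Choosing $\bc=\alpha d$ with small $\alpha>0$ and $a=-\beta\,\boldsymbol{p}_l$ with large $\beta>0$ makes the indefinite cross term dominate the positive quadratic $\alpha^2 d^T A d$, yielding a strictly negative directional second derivative and hence $\eigmin(\nabla^2\erisk(\btheta^*))<0$.

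The main subtlety I anticipate is establishing the \emph{exact} (rather than first-order) propagation identity for $h_L$ under the rank-one perturbation $d a^T$, which relies on $d\in\bigcap_{l'=2}^{L}\ker\bU_{l'}^*$: parameter coverage at \emph{all} layers simultaneously, not merely at layers downstream of $l$. This simultaneous null-space property is what lets a long chain of skip-connections behave like a single block along the direction $d$, sidestepping the monotonicity obstruction exhibited by the 2-block example in Section~\ref{sec:examples}. Everything else reduces to Taylor-expansion bookkeeping, convexity of $\ell$, and positive definiteness of $A$.
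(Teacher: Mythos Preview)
Your proposal is correct and follows essentially the same approach as the paper: split on whether $\bw^*$ lies in $S=\colsp([(\bU_2^*)^T,\ldots,(\bU_L^*)^T])$, and in both cases exploit a direction $d\in S^\perp$ so that a rank-one perturbation $d a^T$ of $\bV_l$ propagates \emph{exactly} through all subsequent blocks, yielding either $\boldsymbol{p}_l=0$ for all $l$ (hence $\erisk(\btheta^*)\leq\erisklin$ via convexity) or a negative-curvature direction built from a joint $(\bw,\bV_l)$ perturbation. The only cosmetic difference is that in the $\bw^*\notin S$ branch the paper reads off $\boldsymbol{p}_l=0$ from the explicit chain-rule formula $\nabla_{\bV_l}\erisk = \bw\,\boldsymbol{p}_l^T+\sum_{k>l}\bU_k^T E_k$ and a column-space argument, whereas you obtain the same conclusion by testing the gradient against $d a^T$ via the exact propagation identity; these are the same computation viewed from two angles, and in the $\bw^*\in S$ branch the paper minimizes over the $\bw$-perturbation to get $-\tfrac12 \boldsymbol{p}_l^T\bDelta^T A^{-1}\bDelta\boldsymbol{p}_l<0$ while you achieve negativity by the equivalent scaling argument on $\alpha,\beta$.
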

The proof of Theorem~\ref{thm:resnet} is deferred to Appendix~\ref{sec:pf-main1}.
Theorem~\ref{thm:resnet} shows that if the two geometric and linear-algebraic conditions hold, then the risk function value for $f_{\btheta^*}$ is at least as good as the best linear predictor, or there is a strict negative eigenvalue of the Hessian at $\btheta^*$ so that it is easy to escape from this saddle point. A direct implication of these conditions is that if they continue to hold over the optimization process, then with curvature sensitive algorithms we can find a local minimum no worse than the best linear predictor; notice that our result holds for general losses and data distributions.

As noted earlier, if $L=1$, our ResNet reduces down to the one considered in \citep{shamir2018resnets}. In this case, the second condition is always satisfied because it does not involve the first residual block. In fact, our proof reveals that in the $L=1$ case, any critical point with $\bw^* \neq 0$ satisfies $\erisk (\btheta^*) \leq \erisklin$ even without the first condition, which recovers the key implication of \citep[Theorem~1]{shamir2018resnets}. We again emphasize that Theorem~\ref{thm:resnet} extends the previous result.

Theorem~\ref{thm:resnet} also implies something noteworthy about the role of skip-connections in general. Existing results featuring beneficial impacts of skip-connections or parellel shortcut networks on optimization landscapes require direct connection to output \cite{liang2018understanding, liang2018adding, nguyen2018loss} or the last hidden layer \cite{shamir2018resnets}.
The multi-block ResNet we consider in our paper is fundamentally different from other works; the skip-connections connect input to output \emph{through a chain of multiple skip-connections}. Our paper proves that multiple skip-connections (as opposed to direct) can also improve the optimization landscape of neural networks, as was observed empirically \citep{li2018visualizing}.

We now discuss the conditions. We call the first condition the \emph{representation coverage condition}, because it requires that the representation $h_L(x)$ by the last residual block ``covers'' the full space $\reals^{d_x}$ so that $\E_{(x,y) \sim \mc P} \left [ \ell''(f_{\btheta}(x);y) h_L(x) h_L(x)^T \right ]$ is full rank. Especially in cases where $\ell$ is strictly convex, this condition is very mild and likely to hold in most cases.

The second condition is the \emph{parameter coverage condition}. It requires that the subspace spanned by the rows of $\bU_{2}^*, \ldots, \bU_L^*$ is not the full space $\reals^{d_x}$. This condition means that the parameters $\bU_{2}^*, \ldots, \bU_L^*$ do \emph{not} cover the full feature space $\reals^{d_x}$, so there is some information in the data/representation that this network ``misses,'' which enables us to easily find a direction to improve the parameters. 

These conditions stipulate that if the data representation is ``rich'' enough but the parameters do not cover the full space, then there is always a sufficient room for improvement. We also note that there is an architectural \textit{sufficient condition} $\sum_{l=2}^L m_l < d_x$ for our parameter coverage condition to \emph{always} hold, which yields the following noteworthy corollary:
\begin{corollary}
	\label{cor:resnet}
	Suppose Assumption~\ref{asm:loss} holds. For a ResNet $f_\btheta(\cdot)$ that satisfies $\sum_{l=2}^L m_l < d_x$, let $\btheta^*$ be a twice-differentiable critical point of $\erisk(\cdot)$. Then, the conclusion of Theorem~\ref{thm:resnet} holds as long as $\E_{(x,y) \sim \mc P} \left [ \ell''(f_{\btheta^*}(x);y) h_L(x) h_L(x)^T \right ]$ is full-rank.
\end{corollary}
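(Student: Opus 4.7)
The plan is to deduce Corollary~\ref{cor:resnet} directly from Theorem~\ref{thm:resnet} by verifying that the architectural condition $\sum_{l=2}^L m_l < d_x$ automatically forces the parameter coverage condition. Since the corollary already assumes the full-rank hypothesis (which is precisely the representation coverage condition), this is the only remaining ingredient needed to invoke the theorem.

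First, I would just read off the shapes of the matrices involved. By construction of the ResNet, each $\bU_l^* \in \reals^{m_l \times d_x}$, so $(\bU_l^*)^T \in \reals^{d_x \times m_l}$, and the horizontal concatenation
\[
M^* \defeq \begin{bmatrix} (\bU_2^*)^T & \cdots & (\bU_L^*)^T \end{bmatrix}
\]
lies in $\reals^{d_x \times \sum_{l=2}^L m_l}$. The column space $\colsp(M^*)$ is a subspace of $\reals^{d_x}$ whose dimension is at most the total number of columns of $M^*$, namely $\sum_{l=2}^L m_l$. Under the assumed inequality $\sum_{l=2}^L m_l < d_x$, this dimension is strictly less than $d_x$, so $\colsp(M^*)$ is a proper subspace of $\reals^{d_x}$. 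This is exactly the parameter coverage condition appearing in the second bullet of Theorem~\ref{thm:resnet}.

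With the representation coverage condition supplied by the hypothesis of the corollary and the parameter coverage condition established above, both hypotheses of Theorem~\ref{thm:resnet} hold at $\btheta^*$, so the dichotomy $\erisk(\btheta^*) \leq \erisklin$ or $\eigmin(\nabla^2 \erisk(\btheta^*)) < 0$ transfers verbatim. There is no real obstacle in this proof: the heavy lifting lives entirely inside Theorem~\ref{thm:resnet}, and the corollary reduces to a one-line dimension count. The only subtlety worth noting is that the strictness of $\sum_{l=2}^L m_l < d_x$ (rather than $\leq$) is essential, since equality would permit $\colsp(M^*) = \reals^{d_x}$ and the parameter coverage condition could then fail.
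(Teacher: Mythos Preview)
Your proposal is correct and matches the paper's reasoning exactly: the paper simply observes that $\sum_{l=2}^L m_l < d_x$ is an architectural sufficient condition for the parameter coverage condition, and the corollary follows immediately from Theorem~\ref{thm:resnet}. Your explicit dimension count on the concatenated matrix $M^*$ is precisely the justification behind that observation.
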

\paragraph{Example.} Consider a deep ResNet with very simple residual blocks: $h \mapsto h + \bv_l \sigma (\bu_l^T h)$, where $\bv_l, \bu_l \in \reals^{d_x}$ are vectors and $\sigma$ is ReLU activation. Even this simple architecture is a universal approximator~\citep{lin2018resnet}. Notice that Corollary~\ref{cor:resnet} applies to this architecture as long as the depth $L \leq d_x$.

The reader may be wondering what happens if the coverage conditions are not satisfied. In particular, if the parameter coverage condition is not satisfied, i.e., $\colsp\left ( \begin{bmatrix} (\bU_2^*)^T& \cdots& (\bU_L^*)^T \end{bmatrix} \right ) = \reals^{d_x}$, we conjecture that since the parameters already cover the full feature space, the critical point should be of ``good'' quality. However, we leave a weakening/removal of our geometric conditions to future work.


\section{Benign properties in near-identity regions of ResNets}
\label{sec:thm2}

This section studies near-identity regions in optimization and generalization aspects, and shows interesting bounds that hold in near-identity regions. We first show an upper bound on the risk value at critical points, and show that the bound is $\erisklin$ plus a size-independent (i.e., independent of depth and width) constant if the Lipschitz constants of $\Phi^l_{\btheta}$'s satisfy $O(1/L)$. We then prove a Rademacher complexity bound on ResNets, and show that the bound also becomes size-independent if $\Phi^l_{\btheta}$ is $O(1/L)$-Lipschitz.

\subsection{Upper bound on the risk value at critical points}
Even without the geometric conditions in Section~\ref{sec:thm1}, can we prove an upper bound on the risk value of critical points? We prove that for general architectures, the risk value of critical points can be bounded above by $\erisklin$ plus an additive term. Surprisingly, if each residual block is close to identity, this additive term becomes depth-independent.

In this subsection, the residual parts $\Phi^l_{\btheta}(\cdot)$ of ResNet can have any general feedforward architecture:
\begin{align*}
\Phi^l_{\btheta}(t)= \phi^l_{\bz} (t), \quad l = 1, \dots, L.
\end{align*}
The collection of all parameters is simply $\btheta \defeq (\bw, \bz)$.
We make the following assumption on the functions $\phi^l_{\bz} : \reals^{d_x} \mapsto \reals^{d_x}$: 
\begin{assumption}
	\label{asm:res2}
	For any $l \in [L]$, the residual part $\phi^l_{\bz}$ is $\rho_l$-Lipschitz, and $\rho_l(\zeros) = \zeros$.
\end{assumption}
For example, this assumption holds for $\phi^l_{\bz} (t)= \bV_l \sigma (\bU_l t)$, where $\sigma$ is ReLU activation. In this case, $\rho_l$ depends on the spectral norm of $\bV_l$ and $\bU_l$.

We also make the following assumption on the loss function $\ell$:
\begin{assumption}
	\label{asm:loss2}
	The loss function $\ell(p; y)$ is a convex differentiable function of $p$. We also assume that $\ell(p;y)$ is $\mu$-Lipschitz;, i.e., $|\ell'(p;y)| \leq \mu$ for all $p$.
\end{assumption}

Under these assumptions, we prove a bound on the risk value attained at critical points of ResNets.
\begin{theorem}
	\label{thm:resnet2}
	Suppose Assumptions~\ref{asm:res2} and \ref{asm:loss2} hold. Let $\btheta^*$ be any critical point of $\erisk(\cdot)$.
	Let $\hat t\in \reals^{d_x}$ be any vector that attains the best linear fit, i.e., $\erisklin = \E_{(x,y)\sim \mc P} \left [ \ell ( \hat t^T x  ; y) \right ]$. Then, for any critical point $\btheta^*$ of $\erisk (\cdot)$,
	\begin{equation*}
	\erisk (\btheta^*) \leq \erisklin + \mu \norms{\hat t} \big ( \prod\nolimits_{l=1}^L (1+\rho_l) - 1 \big) \E_{(x,y)\sim \mc P} [\norm{x}].
	\end{equation*}
\end{theorem}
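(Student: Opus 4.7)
The plan is to combine convexity of the loss with the first-order optimality condition in $\bw$, and then to use Assumption~\ref{asm:res2} to control how much the representation $h_L(x)$ drifts from the input $x$. First I would apply convexity of $\ell(\cdot;y)$ at $p = f_{\btheta^*}(x) = (\bw^*)^T h_L(x)$ against the comparator $q = \hat t^T x$ to get the pointwise inequality $\ell(f_{\btheta^*}(x);y) \leq \ell(\hat t^T x;y) + \ell'(f_{\btheta^*}(x);y)\bigl((\bw^*)^T h_L(x) - \hat t^T x\bigr)$. Taking expectation over $(x,y)\sim\mc P$ would give
\begin{equation*}
\erisk(\btheta^*) \leq \erisklin + \E_{(x,y)\sim\mc P}\bigl[\ell'(f_{\btheta^*}(x);y)\bigl((\bw^*)^T h_L(x) - \hat t^T x\bigr)\bigr].
\end{equation*}

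The key step, which drives everything, is a simple add-and-subtract decomposition
\begin{equation*}
(\bw^*)^T h_L(x) - \hat t^T x = (\bw^* - \hat t)^T h_L(x) + \hat t^T\bigl(h_L(x) - x\bigr),
\end{equation*}
followed by the critical-point condition in $\bw$, namely
\begin{equation*}
\nabla_{\bw}\erisk(\btheta^*) = \E_{(x,y)\sim\mc P}[\ell'(f_{\btheta^*}(x);y)\,h_L(x)] = \zeros.
\end{equation*}
Contracting this identity against the constant vector $\bw^* - \hat t$ kills the first summand, and I would then apply Cauchy--Schwarz together with the Lipschitz bound $|\ell'(\cdot;y)|\leq \mu$ from Assumption~\ref{asm:loss2} to arrive at
\begin{equation*}
\erisk(\btheta^*) \leq \erisklin + \mu\,\norm{\hat t}\,\E_{(x,y)\sim\mc P}[\norm{h_L(x) - x}].
\end{equation*}

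What remains is to control $\norm{h_L(x) - x}$. Because $\phi^l_{\bz}(\zeros) = \zeros$ and $\phi^l_{\bz}$ is $\rho_l$-Lipschitz, a straightforward induction on $l$ gives $\norm{h_l(x)} \leq \prod_{k=1}^l(1+\rho_k)\norm{x}$. Telescoping $h_L(x) - x = \sum_{l=1}^L \phi^l_{\bz}(h_{l-1}(x))$ and applying the triangle inequality, I would then obtain
\begin{equation*}
\norm{h_L(x) - x} \leq \sum_{l=1}^L \rho_l \norm{h_{l-1}(x)} \leq \sum_{l=1}^L \rho_l \prod_{k=1}^{l-1}(1+\rho_k)\,\norm{x} = \Bigl(\prod_{l=1}^L(1+\rho_l) - 1\Bigr)\norm{x},
\end{equation*}
where the last equality is the telescoping identity $\rho_l \prod_{k=1}^{l-1}(1+\rho_k) = \prod_{k=1}^l(1+\rho_k) - \prod_{k=1}^{l-1}(1+\rho_k)$. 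Substituting back yields the stated bound. Notably, the argument uses only the $\bw$-component of stationarity; the $\bz$-gradient conditions and any second-order information are never invoked. The one conceptual step I would flag as the ``hard part'' is recognizing that stationarity in $\bw$ lets one silently swap the trained weight $\bw^*$ for the optimal linear comparator $\hat t$ at zero cost, collapsing the whole approximation error into a pure representation-drift term $h_L(x)-x$ that is automatically small whenever the residual blocks are near-identity.
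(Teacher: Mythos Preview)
Your proof is correct and follows essentially the same approach as the paper: convexity of $\ell$, the add-and-subtract decomposition $(\bw^*-\hat t)^T h_L(x)+\hat t^T(h_L(x)-x)$, the stationarity condition $\nabla_{\bw}\erisk(\btheta^*)=\zeros$ to kill the first term, the Lipschitz bound $|\ell'|\leq\mu$, and the telescoping identity $\sum_{l=1}^L \rho_l\prod_{k=1}^{l-1}(1+\rho_k)=\prod_{l=1}^L(1+\rho_l)-1$. The only cosmetic difference is that you bound $\norm{h_L(x)-x}$ in one step via the triangle inequality, whereas the paper first expands $h_L(x)-x=\sum_l \phi^l_{\bz}(h_{l-1}(x))$ and bounds each summand separately before summing; the resulting estimates are identical.
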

The proof can be found in Appendix~\ref{sec:pf-main2}. Theorem~\ref{thm:resnet2} provides an upper bound on $\erisk (\btheta^*)$ for critical points, without any conditions as in Theorem~\ref{thm:resnet}. Of course, depending on the values of constants, the bound could be way above $\erisklin$. However, if $\rho_l = O(1/L)$, the term $\prod\nolimits_{l=1}^L (1+\rho_l)$ is bounded above by a constant, so the additive term in the upper bound becomes size-independent. Furthermore, if $\rho_l = o(1/L)$, the term $\prod\nolimits_{l=1}^L (1+\rho_l) \rightarrow 1$ as $L \rightarrow \infty$, so the additive term in the upper bound diminishes to zero as the network gets deeper. This result indicates that the near-identity region has a good optimization landscape property that any critical point has a risk value that is not too far off from $\erisklin$.

\subsection{Radamacher complexity of ResNets}
In this subsection, we consider ResNets with the following residual part:
\begin{align*}
\Phi^l_{\btheta}(t)= \bV_l \sigma(\bU_l t), \quad l = 1, \dots, L,
\end{align*}
where $\sigma$ is ReLU activation, $\bV_l \in \reals^{d_x \times d_l}, \bU_l \in \reals^{d_l \times d_x}$. For this architecture, we prove an upper bound on empirical Rademacher complexity that is size-independent in the near-identity region.

Given a set $S = (x_1, \dots, x_n)$ of $n$ samples, and a class $\mc F$ of real-valued functions defined on $\mc X$, the \textbf{empirical Rademacher complexity} or  \textbf{Rademacher averages} of $\mc F$ restricted to $S$ (denoted as $\mc F \vert_{S}$) is defined as
\begin{equation*}
\mathscr{\what R}_n ( \mc F \vert_{S}) = \E_{\epsilon_{1:n}} \left[ \sup_{f \in \mc F} \frac{1}{n} \sum_{i=1}^n \epsilon_i f(x_i) \right],
\end{equation*}
where $\epsilon_i$, $i = 1, \dots n$, are i.i.d.\ Rademacher random variables (i.e., Bernoulli coin flips with probability 0.5 and outcome $\pm 1$).

We now state the main result, which proves an upper bound on the Rademacher averages of the class of ResNet functions on a compact domain and norm-bounded parameters. 
\begin{theorem}
	\label{thm:resnetrad}
	Given a set $S = (x_1, \dots, x_n)$, suppose $\norm{x_i} \leq B$ for all $i \in [n]$. 
	Define the function class $\mc F_{L}$ of $L$-block ResNet with parameter constraints as:
	\begin{equation*}
	\mc F_L \defeq \{ f_\btheta: \reals^{d_x} \mapsto \reals \mid \norm{\bw} \leq 1, \text{ and } \nfro{\bV_l}, \nfro{\bU_l} \leq M_l \text{ for all } l \in [L]\}.
	\end{equation*}
	Then, the empirical Rademacher complexity satisfies
	\begin{equation*}
	\mathscr{\what R}_n ( \mc F_L \vert_{S}) \leq \frac{B \prod_{l=1}^L (1+2M_l^2)}{\sqrt n}.
	\end{equation*}
\end{theorem}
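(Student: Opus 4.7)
The plan is to peel off residual blocks one at a time via a layer-wise induction that tracks the \emph{vector-valued} Rademacher complexity of each representation $h_l$. Concretely, define
\[
R_l \defeq \tfrac{1}{n}\,\E_\epsilon \sup_{\btheta}\, \norm{\textstyle\sum_{i=1}^n \epsilon_i\, h_l(x_i)},
\]
where the supremum ranges over parameters of the first $l$ blocks subject to the stated Frobenius bounds. Since $f_\btheta(x)=\bw^T h_L(x)$ with $\norm{\bw}\le 1$, absorbing the sup over $\bw$ via $\sup_{\norm{\bw}\le 1}\bw^T v=\norm{v}$ gives $\mathscr{\what R}_n(\mc F_L\vert_S) \le R_L$, so the theorem reduces to showing $R_L \le (B/\sqrt n)\prod_{l=1}^L(1+2M_l^2)$.

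The base case is standard: $h_0(x)=x$, and Jensen's inequality combined with $\E_\epsilon\norm{\sum_i\epsilon_i x_i}^2 = \sum_i\norm{x_i}^2 \le nB^2$ gives $R_0 \le B/\sqrt n$. The inductive claim is $R_l \le (1+2M_l^2)R_{l-1}$. Setting $A_l:=\sum_i\epsilon_i h_l(x_i)$ and $C_l:=\sum_i\epsilon_i\sigma(\bU_l h_{l-1}(x_i))$, the recursion $h_l=h_{l-1}+\bV_l\sigma(\bU_l h_{l-1})$, the triangle inequality, and $\norm{\bV_l}\le\nfro{\bV_l}\le M_l$ give $\norm{A_l}\le\norm{A_{l-1}}+M_l\norm{C_l}$ for every $\btheta$ and $\epsilon$; taking the sup and the $\epsilon$-expectation,
\[
R_l \;\le\; R_{l-1} + \tfrac{M_l}{n}\,\E_\epsilon\sup\norm{C_l}.
\]
It therefore suffices to establish the \emph{contraction lemma} $\tfrac{1}{n}\E_\epsilon\sup\norm{C_l}\le 2M_l\,R_{l-1}$, after which the induction closes with per-layer multiplier $1+2M_l^2$ and iterating gives the stated bound.

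For the contraction lemma I would use the duality $\norm{C_l}=\sup_{\norm{z}\le 1}z^T C_l$ and absorb the sup over $z$ into the outer supremum, turning the quantity into the Rademacher complexity of the scalar class $\{\,x\mapsto z^T\sigma(\bU_l h_{l-1}(x)) : \norm{z}\le 1,\, \btheta_{1:l}\,\}$. For each fixed $z$ the map $u\mapsto z^T\sigma(u)$ is $\norm{z}$-Lipschitz and vanishes at $u=0$, since ReLU is $1$-Lipschitz with $\sigma(0)=0$. A vector-input Ledoux--Talagrand / Maurer-style contraction then removes $\sigma$ at the cost of a factor of $2$, giving
\[
\E_\epsilon\sup\norm{C_l} \;\le\; 2\,\E_\epsilon\sup\norm{\bU_l \textstyle\sum_i\epsilon_i h_{l-1}(x_i)} \;\le\; 2M_l\cdot n\,R_{l-1},
\]
where the last step uses $\norm{\bU_l u}\le\nfro{\bU_l}\norm{u}\le M_l\norm{u}$. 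Combining these gives $R_l\le (1+2M_l^2)R_{l-1}$, and iterating from $l=1$ to $L$ with the base case proves the theorem.

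The hard part will be the contraction lemma itself: unlike in purely scalar settings, the Rademacher complexity of a ReLU composition inside a \emph{norm} is not immediately covered by the standard Ledoux--Talagrand inequality. A careful proof would either invoke Maurer's vector-contraction theorem applied through the dual witness $z$, or use the direct decomposition $\sigma(t)=\tfrac{1}{2}(t+|t|)$ combined with a sign-flipping/symmetrization argument to bound the absolute-value part by the linear part. Getting the constant exactly equal to $2$ (and hence the per-layer factor $1+2M_l^2$) is the delicate point, but once it is in place the rest of the argument is a mechanical induction.
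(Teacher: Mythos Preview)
Your inductive framework coincides with the paper's: since $\sup_{\norm{\bw}\le 1}\bw^T v=\norm{v}$, your quantity $R_l$ is exactly the paper's $\mathscr{\what R}_n(\mc F_l\vert_S)$, and both arguments aim at the recursion $R_l\le(1+2M_l^2)R_{l-1}$ by splitting off the residual part and bounding $\tfrac{1}{n}\E_\epsilon\sup\norm{C_l}$. The divergence is entirely in how the contraction step is executed, and that is where your proposal has a gap.

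The paper's key device, absent from your plan, is to use the \emph{positive homogeneity} of ReLU to collapse the vector quantity to a scalar one before any contraction is applied. Writing $\bU_l$ row-wise as $u_1^T,\dots,u_k^T$ and using $\sigma(ct)=c\,\sigma(t)$ for $c\ge 0$, one gets
\[
\norm{C_l}^2=\sum_j\norm{u_j}^2\Big(\textstyle\sum_i\epsilon_i\,\sigma(\hat u_j^T h_{l-1}(x_i))\Big)^2,
\]
so the sup over the Frobenius ball $\nfro{\bU_l}\le M_l$ concentrates all mass on a single row: $\sup_{\nfro{\bU_l}\le M_l}\norm{C_l}=\sup_{\norm{u}\le M_l}\big|\sum_i\epsilon_i\sigma(u^T h_{l-1}(x_i))\big|$. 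Once the problem is scalar, the paper splits $|\,\cdot\,|=[\,\cdot\,]_++[\,\cdot\,]_-$, uses symmetry of the Rademacher signs plus the observation that $u=0$ forces the sup to be nonnegative (so $[\,\cdot\,]_+$ can be dropped), and finishes with the ordinary scalar Ledoux--Talagrand contraction. This is what produces the factor $2$ exactly.

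Your two proposed routes, as stated, do not close without this reduction. Applying Maurer's vector contraction to the $\norm{z}$-Lipschitz map $u\mapsto z^T\sigma(u)$ yields a bound of the form $\sqrt{2}\,\E_\epsilon\sup\sum_{i,k}\epsilon_{ik}(\bU_l h_{l-1}(x_i))_k$ with independent Rademacher variables for each coordinate $k$; this is not $\E_\epsilon\sup\norm{\bU_l A_{l-1}}$, and converting it back picks up a dimension factor through $\nfro{\cdot}$. The decomposition $\sigma(t)=\tfrac12(t+|t|)$ handles the linear half immediately, but bounding $\E_\epsilon\sup\norm{\sum_i\epsilon_i|\bU_l h_{l-1}(x_i)|}$ still leaves you with a coordinate-wise nonlinearity inside a norm, which again needs the rank-one reduction to become a scalar contraction problem. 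So the positive-homogeneity step is the missing idea, not a cosmetic simplification.
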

The proof of Theorem~\ref{thm:resnetrad} is deferred to Appendix~\ref{sec:pf-rad}. The proof technique used in Theorem~\ref{thm:resnetrad} is to ``peel off'' the blocks: we upper-bound the Rademacher complexity of a $l$-block ResNet with that of a $(l-1)$-block ResNet multiplied by $1+2M_l^2$.
Consider a fully-connected network $x \mapsto \bW_L \sigma (\bW_{L-1} \cdots \sigma (\bW_1 x) \cdots )$, where $\bW_l$'s are weight matrices and $\sigma$ is ReLU activation. The same ``peeling off'' technique was used in \citep{neyshabur2015norm}, which showed a bound of
$O\left (B \cdot 2^L \prod_{l=1}^L C_l/\sqrt{n} \right )$,
where $C_l$ is the Frobenius norm bound of $\bW_l$.
As we can see, this bound has an exponential dependence on depth $L$, which is difficult to remove. 
Other results \citep{neyshabur2017exploring, bartlett2017spectrally} reduced the dependence down to polynomial, but it wasn't until the work by \citet{golowich2017size} that a size-independent bound became known. However, their size-independent bound has worse dependence on $n$ ($O(1/n^{1/4})$) than other bounds ($O(1/\sqrt{n})$).

In contrast, Theorem~\ref{thm:resnetrad} shows that for ResNets, the upper bound easily becomes \emph{size-independent} as long as $M_l = O(1/\sqrt L)$, which is surprising. 
Of course, for fully-connected networks, the upper bound above can also be made size-independent by forcing $C_l \leq 1/2$ for all $l \in [L]$. However, in this case, the network becomes \textbf{trivial}, meaning that the output has to be very close to zero for any input $x$. In case of ResNets, the difference is that the bound can be made size-independent \textbf{even for non-trivial networks}.
\section{Conclusion}

We investigated the question whether local minima of risk function of a \emph{deep} ResNet are better than linear predictors. We showed two motivating examples showing 1) the advantage of ResNets over fully-connected networks, and 2) difficulty in analysis of deep ResNets. Then, we showed that under geometric conditions, any critical point of the risk function of a deep ResNet has benign properties that it is either better than linear predictors or the Hessian at the critical point has a strict negative eigenvalue. We supplement the result by showing size-independent upper bounds on the risk value of critical points as well as empirical Rademacher complexity for near-identity regions of deep ResNets. We hope that this work becomes a stepping stone on deeper understanding of ResNets.

\subsubsection*{Acknowledgments}

All the authors acknowledge support from DARPA Lagrange. Chulhee Yun also thanks Korea Foundation for Advanced Studies for their support. Suvrit Sra also acknowledges support from an NSF-CAREER grant and an Amazon Research Award.

\bibliographystyle{abbrvnat}
\bibliography{cite.bib}

\newpage
\clearpage
\appendix
\section{Proof of Theorem~\ref{thm:resnet}}
\label{sec:pf-main1}
Before we begin the proof, let us introduce more notation.
Since we only consider a single critical point, for simplicity of notation we denote the critical point as $\btheta = (\bw, \bV_{1}, \bV_{2}, \bU_{2}, \ldots, \bV_L, \bU_L, \bz)$, without $*$.
For $l \in [2:L]$, let $J_l(x) \defeq \nabla \phi^l_{\bz}( \bU_l h_{l-1}(x)) \in \reals^{n_l \times m_l}$, i.e., $J^l(x)$ is the Jacobian matrix of $\phi^l_{\bz}(\cdot)$ evaluated at $\bU_l h_{l-1}(x)$, whenever it exists. Also, let $\mc U \defeq \colsp \left (
\begin{bmatrix}
\bU_2^T&
\cdots&
\bU_L^T
\end{bmatrix}
\right )
\subsetneq \reals^{d_x}$.

The proof is divided into two cases: 1) if $\bw \notin \mc U$, and 2) if $\bw \in \mc U$. 
For Case~1, we will show that $\erisk (\btheta^*) \leq \erisklin$; we also note that our representation coverage condition $\rank(\E_{(x,y) \sim \mc P} \left [ \ell''(f_{\btheta}(x);y) h_L(x) h_L(x)^T \right ]) = d_x$ is not required for Case 1.
For Case~2, we will show that at least one of $\erisk (\btheta^*) \leq \erisklin$ or $\eigmin(\nabla^2 \erisk (\btheta^*)) < 0$ has to hold.

\paragraph{Case 1: If $w \notin \mc U$.}
From standard matrix calculus, we can calculate the partial derivatives of $\erisk$ with respect to $\bw$ and $\bV_l$'s. Since $\btheta$ is a critical point we have
\begin{align*}
\frac{\partial \erisk}{\partial \bw} (\btheta) &= 
\E 
\left [ 
\ell'(f_{\btheta}(x);y) h_L(x)
\right ]
= \zeros,\\
\frac{\partial \erisk}{\partial \bV_l} (\btheta) &= 
\E 
\left [ 
\ell'(f_{\btheta}(x);y) \prod_{k=l+1}^L (I+\bU_k^T J_k(x)^T \bV_k^T) \bw \phi^l_{\bz} (U_l h_{l-1}(x))^T
\right ]
= \zeros, \quad l = 2, \dots, L,\\
\frac{\partial \erisk}{\partial \bV_1} (\btheta) &= 
\E 
\left [ 
\ell'(f_{\btheta}(x);y) \prod_{k=2}^L (I+\bU_k^T J_k(x)^T \bV_k^T) \bw \phi^1_{\bz} (x)^T
\right ]
= \zeros.
\end{align*}
For $\bV_2, \ldots, \bV_L$, note that we can arrange terms and express the partial derivatives as
\begin{equation}
\label{eq:1}
\frac{\partial \erisk}{\partial \bV_l} (\btheta) = 
\bw
\E 
\left [ 
\ell'(f_{\btheta}(x);y) \phi^l_{\bz} (U_l h_{l-1}(x))
\right ]^T
+
\sum_{k=l+1}^L
\bU_k^T E_k
= \zeros,
\end{equation}
where $E_k \in \reals^{m_l \times n_l}$ are appropriately defined matrices.
Note that any column of $\sum_{k=l+1}^L \bU_k^T E_k$ is in $\mc U$.
Since $\bw \notin \mc U$, the sum being zero \eqref{eq:1} implies that $\E 
\left [ 
\ell'(f_{\btheta}(x);y) \phi^l_{\bz} (U_l h_{l-1}(x))
\right ] = \zeros$ (because $\bw \notin \mc U$ already implies that $\bw \neq 0$), for all $l \in [2:L]$. Similarly, we have $\E \left [ \ell'(f_{\btheta}(x);y) \phi^1_{\bz} (x) \right ] = \zeros$.

Now, from $\E \left [ \ell'(f_{\btheta}(x);y) h_L(x) \right ] = \zeros$,
\begin{align*}
\zeros =& \E \left [ \ell'(f_{\btheta}(x);y) h_L(x) \right ] \\
=& \E \left [ \ell'(f_{\btheta}(x);y) \big (h_{L-1}(x) + \bV_L \phi^L_{\bz} (U_L h_{L-1}(x)) \big ) \right ]\\
=& \E \left [ \ell'(f_{\btheta}(x);y) h_{L-1}(x) \right] +
\bV_L \E \left [ \ell'(f_{\btheta}(x);y) \phi^L_{\bz} (U_L h_{L-1}(x)) \right ]\\
=& \E \left [ \ell'(f_{\btheta}(x);y) h_{L-1}(x) \right] = \cdots = \E \left [ \ell'(f_{\btheta}(x);y) x \right].
\end{align*}

Recall that by convexity, $\ell(p;y) - \ell(q;y) \leq \ell'(p;y)(p-q)$. Now for any $t \in \reals^{d_x}$, we can apply this inequality for $p = f_{\btheta}(x) = \bw^T h_L(x)$ and $q = t^T x$:
\begin{align*}
\E \left [ \ell (f_\btheta(x);y)\right ] - \E \left [ \ell (t^T x;y)\right ] 
&\leq \E \left [ \ell'(f_{\btheta}(x);y) (\bw^T h_L(x) - t^T x) \right ]\\
&= \bw^T \E \left [ \ell'(f_{\btheta}(x);y) h_L(x) \right] - t^T \E \left [ \ell'(f_{\btheta}(x);y) x \right ] = 0.
\end{align*}
Thus, $\E \left [ \ell (f_\btheta(x);y)\right ] \leq \E \left [ \ell (t^T x;y)\right ]$ for all $t$, so taking infimum over $t$ gives $\erisk (\btheta^*) \leq \erisklin$.

\paragraph{Case 2: If $w \in \mc U$.}
For this case, we will consider the Hessian of $\erisk$ with respect to $\bw$ and $\bV_l$, for each $l \in [L]$.
We will show that if $\E \left [ \ell'(f_{\btheta}(x);y) \phi^l_{\bz} (U_l h_{l-1}(x)) \right ] \neq \zeros$, then $\eigmin(\nabla^2 \erisk(\btheta)) < 0$.
This implies that if $\E \left [ \ell'(f_{\btheta}(x);y) \phi^l_{\bz} (U_l h_{l-1}(x)) \right ] = \zeros$ for all $l \in [L]$, then by the same argument as in Case~1 we have $\erisk (\btheta^*) \leq \erisklin$; otherwise, we have $\eigmin(\nabla^2 \erisk(\btheta)) < 0$.

Because $\btheta$ is a twice-differentiable critical point of $\erisk(\cdot)$, if we apply perturbation $\bdelta$ to $\btheta$ and do Taylor expansions, what we get is
\begin{equation}
\label{eq:2}
\erisk(\btheta+\bdelta) = \erisk(\btheta) + \half \bdelta^T \nabla^2\erisk(\btheta) \bdelta + o(\norm{\bdelta}^2).
\end{equation}
So, if we apply a particular form of perturbation $\bdelta$, calculate $\erisk(\btheta+\bdelta)$, and then show that the sum of all second-order perturbation terms are negative for such a $\bdelta$, it is equivalent to showing $\half \bdelta^T \nabla^2\erisk(\btheta) \bdelta < 0$, hence $\eigmin(\nabla^2\erisk(\btheta))< 0$.

Now fix any $l \in [2:L]$, and consider perturbing $\bw$ by $\beps$ and $\bV_l$ by $\bDelta$, while leaving all other parameters unchanged. We will choose $\bDelta = \alpha \beta^T$, where $\alpha \in \reals^{d_x}$ is chosen from $\alpha \in \mc U^\perp$, the orthogonal complement of $\mc U$, and $\beta \in \reals^{n_l}$ will be chosen later.
We will now compute $\erisk(\btheta+\bdelta)$ directly from the network architecture.
The residual block output $h_1(x), \ldots, h_{l-1}(x)$ stays invariant after perturbation because their parameters didn't change. For $l$-th residual block, the output after perturbation, denoted as $\tilde h_l(x)$, becomes 
\begin{equation*}
\tilde h_l(x) = h_l(x) + \bDelta \phi^l_{\bz} (\bU_l h_{l-1}(x)).
\end{equation*}
The next residual block output is
\begin{align*}
\tilde h_{l+1}(x) 
&= \tilde h_l(x) + \bV_{l+1} \phi^{l+1}_{\bz} (\bU_{l+1} \tilde h_{l}(x))\\
&= h_l(x) + \bDelta \phi^l_{\bz} (\bU_l h_{l-1}(x)) + 
\bV_{l+1} \phi^{l+1}_{\bz} \big (\bU_{l+1} h_{l}(x) + \bU_{l+1} \bDelta \phi^l_{\bz} (\bU_l h_{l-1}(x)) \big )\\
&\eqwtxt{\tiny{(a)}} h_l(x) + \bDelta \phi^l_{\bz} (\bU_l h_{l-1}(x)) + 
\bV_{l+1} \phi^{l+1}_{\bz} (\bU_{l+1} h_{l}(x))\\
&= h_{l+1}(x) + \bDelta \phi^l_{\bz} (\bU_l h_{l-1}(x)),
\end{align*}
where (a) used the fact that $\bU_{l+1} \bDelta = \bU_{l+1} \alpha \beta^T = 0$ because $\alpha \in \mc U^\perp$.
We can propagate this up to $\tilde h_L(x)$ and similarly show $\tilde h_L(x) = h_L(x) + \bDelta \phi^l_{\bz} (\bU_l h_{l-1}(x))$. Using this, the network output after perturbation, denoted as $f_{\btheta+\bdelta}(\cdot)$, is
\begin{align*}
f_{\btheta+\bdelta}(x) 
&= (\bw + \beps)^T \big (h_L(x) + \bDelta \phi^l_{\bz} (\bU_l h_{l-1}(x)) \big)\\
&= f_{\btheta}(x) + \beps^T h_L(x) + \bw^T \bDelta \phi^l_{\bz} (\bU_l h_{l-1}(x)) + \beps^T \bDelta \phi^l_{\bz} (\bU_l h_{l-1}(x))\\
&\eqwtxt{\tiny{(b)}} f_{\btheta}(x) + \beps^T h_L(x) + \beps^T \bDelta \phi^l_{\bz} (\bU_l h_{l-1}(x)),
\end{align*}
where (b) used $\bw^T \bDelta = \bw^T \alpha \beta^T = 0$ because $\bw \in \mc U$ and $\alpha \in \mc U^\perp$.
Using this, the risk function value after perturbation is
\begin{align*}
\erisk(\btheta+\bdelta) 
&= \E \left [ \ell (f_{\btheta+\bdelta}(x); y) \right ]\\
&= \E \left [ \ell (f_\btheta(x) + \beps^T h_L(x) + \beps^T \bDelta \phi^l_{\bz} (\bU_l h_{l-1}(x)); y) \right ]\\
&\eqwtxt{\tiny{(c)}}
\E \Big [ 
\ell (f_\btheta(x);y) 
+ \ell'(f_\btheta(x);y) \big (\beps^T h_L(x) + \beps^T \bDelta \phi^l_{\bz} (\bU_l h_{l-1}(x)) \big )\\
&\qquad\qquad + \half \ell''(f_\btheta(x);y) \big ( \beps^T h_L(x) \big )^2 + o(\norm{\bdelta}^2)
\Big ]\\
&\eqwtxt{\tiny{(d)}}
\erisk(\btheta) + 
\E \left [
\ell'(f_\btheta(x);y)\beps^T \bDelta \phi^l_{\bz} (\bU_l h_{l-1}(x))
+ \half \ell''(f_\btheta(x);y) \big ( \beps^T h_L(x) \big )^2
\right ]
+ o(\norm{\bdelta}^2),
\end{align*}
where (c) used Taylor expansion of $\ell(\cdot;y)$ and (d) used that $\E [ \ell'(f_{\btheta}(x);y) h_L(x) ] = \frac{\partial \erisk}{\partial \bw} (\btheta)= \zeros$.
Comparing with the expansion \eqref{eq:2}, the second term in the RHS corresponds to the second-order perturbation $\half \bdelta^T \nabla^2\erisk(\btheta) \bdelta$. 

Now note that
\begin{align*}
&\E \left [
\ell'(f_\btheta(x);y)\beps^T \bDelta \phi^l_{\bz} (\bU_l h_{l-1}(x))
+ \half \ell''(f_\btheta(x);y) \big ( \beps^T h_L(x) \big )^2
\right ]\\
=&
\beps^T \bDelta 
\E \left [
\ell'(f_\btheta(x);y)\phi^l_{\bz} (\bU_l h_{l-1}(x)) \right ]
+
\half \beps^T
\E \left [ 
\ell''(f_\btheta(x);y) h_L(x) h_L(x)^T
\right ]
\beps.
\end{align*}
Let $A \defeq \E [ \ell''(f_\btheta(x);y) h_L(x) h_L(x)^T ]$ and $b \defeq \E \left [
\ell'(f_\btheta(x);y)\phi^l_{\bz} (\bU_l h_{l-1}(x)) \right ]$ for simplity. By the representation coverage condition of the theorem $A$ is full-rank, hence invertible. We can choose $\beps = - A^{-1} \bDelta b$ to minimize the expression above, then the minimum value we get is $- \half b^T \bDelta^T A^{-1} \bDelta b$.

First, note that $A$ is positive definite, and so is $A^{-1}$. If $b \neq 0$, we can choose $\beta = b$, so $\bDelta b= \alpha \beta^T b = \norm{b}^2 \alpha \neq \zeros$, so $- \half b^T \bDelta^T A^{-1} \bDelta b < 0$. This proves that $\eigmin(\nabla^2\erisk(\btheta))< 0$ if $\E \left [ \ell'(f_\btheta(x);y)\phi^l_{\bz} (\bU_l h_{l-1}(x)) \right ] \neq 0$, as desired.

The case when $l=1$ can be done similarly, by perturbing $\bw$ and $\bV_1$. This finishes the proof.

\section{Proof of Theorem~\ref{thm:resnet2}}
\label{sec:pf-main2}
Since we only consider a single critical point, we denote the critical point as $\btheta = (\bw, \bz)$, without~$*$.
By the same argument as in Case~1 of Proof of Theorem~\ref{thm:resnet}, we can use convexity of $\ell$ to get the following bound:
\begin{align*}
\E \left [ \ell (f_\btheta(x);y)\right ] - \E \left [ \ell (\hat t^T x;y)\right ] 
&\leq \E \left [ \ell'(f_{\btheta}(x);y) (\bw^T h_L(x) - \hat t^T x) \right ]\\
&= (\bw - \hat t)^T \E \left [ \ell'(f_{\btheta}(x);y) h_L(x) \right] + \hat t^T \E \left [ \ell'(f_{\btheta}(x);y) (h_L(x) - x) \right ]\\
&\eqwtxt{\tiny{(a)}} \hat t^T \E \left [ \ell'(f_{\btheta}(x);y) \sum\nolimits_{l=1}^L \phi^l_{\bz}(h_{l-1}(x)) \right ]\\
&\leq \mu \norms{\hat t} \sum_{l=1}^L \E \left [ \norms{\phi^l_{\bz}(h_{l-1}(x))} \right ],
\end{align*}
where (a) used the fact that $\E \left [ \ell'(f_{\btheta}(x);y) h_L(x) \right] = \frac{\partial \erisk}{\partial \bw} = 0$.
Now, for any fixed $l \in [L]$, using Assumption~\ref{asm:res2} we have
\begin{align*}
\norms{\phi^l_{\bz}(h_{l-1}(x))} 
&\leq \rho_l \norms{h_{l-1}(x))}\\
&\leq \rho_l (\norms{h_{l-2}(x)}+\norms{\phi^{l-1}_{\bz} (h_{l-2}(x))})\\
&\leq \rho_l (1 + \rho_{l-1}) \norms{h_{l-2}(x)}\\
&\leq \cdots \leq \rho_l \prod_{k=1}^{l-1} (1+ \rho_k) \norms{x}.
\end{align*}
Substituting this bound to the one above, we get
\begin{align*}
\erisk(\btheta) - \erisklin
\leq \mu \norms{\hat t} \E \left [ \norms{x} \right ] \sum_{l=1}^L \rho_l \prod_{k=1}^{l-1} (1+ \rho_k) 
= \mu \norms{\hat t} \E \left [ \norms{x} \right ] \left ( \prod\nolimits_{k=1}^{L} (1+ \rho_k) - 1 \right ).
\end{align*}

\section{Proof of Theorem~\ref{thm:resnetrad}}
\label{sec:pf-rad}
First, we collect the symbols used in this section. Given a real number $p$, define $\pospart{p} \defeq \max\{p, 0\}$ and $\negpart{p} \defeq \max\{-p,0\}$. Notice that $|p| = \pospart{p}+\negpart{p}$.  Recall that given a vector $x$, let $\norm{x}$ denotes its Euclidean norm. Recall also that given a matrix $M$, let $\norm{M}$ denote its spectral norm, and $\nfro{M}$ denote its Frobenius norm.

The proof is done by a simple induction argument using the ``peeling-off'' technique used for Rademacher complexity bounds for neural networks. Before we start, let us define the function class of hidden layer representations, for $0 \leq l \leq L$:
\begin{equation*}
\mc H_l \defeq \{ h_l: \reals^{d_x} \mapsto \reals^{d_x} \mid \nfro{\bV_j} ,\nfro{\bU_j} \leq M_j \text{ for all } j \in [l] \},
\end{equation*}
defined with the same bounds as used in $\mc F_L$. Note that $\mc H_0$ is a singleton with the identity mapping $x \mapsto x$. 
Also, define $\mc F_l$ to be the class of functions represented by a $l$-block ResNet ($0 \leq l \leq L$):
\begin{equation*}
\mc F_l \defeq \{ x \mapsto w^T h_l(x) \mid \norm{w} \leq 1, h_l \in \mc H_l\}.
\end{equation*}
Note that if $l = L$, this recovers the definition of $\mc F_L$ in the theorem statement.
Since 
\begin{equation*}
\mc F_{0} \defeq \{ x \mapsto w^T x \mid \norm{w} \leq 1 \},
\end{equation*}
it is well-known that $\mathscr{\what R}_n ( \mc F_{0} \vert_{S}) \leq \frac{B}{\sqrt n}$.
The rest of the proof is done by proving the following:
\begin{equation*}
\mathscr{\what R}_n ( \mc F_{l} \vert_{S}) \leq (1+2M_l^2) \mathscr{\what R}_n ( \mc F_{l-1} \vert_{S}),
\end{equation*}
for $l \in [L]$. 

Fix any $l \in [L]$. Then, by the definition of Rademacher complexity,
\begin{align*}
&n \mathscr{\what R}_n ( \mc F_{l} \vert_{S}) 
= \E_{\epsilon_{1:n}} \left [ \sup_{\substack{\norm{w} \leq 1,\\ h_l \in \mc H_l}} \sum_{i=1}^n \epsilon_i w^T h_l(x_i) \right ]\\
=& \E_{\epsilon_{1:n}} \left [ \sup_{\substack{\norm{w} \leq 1,\\ h_{l-1} \in \mc H_{l-1}}} \sup_{\substack{\nfro{\bV_l} \leq M_l \\ \nfro{\bU_l} \leq M_l}}  \sum_{i=1}^n \epsilon_i w^T (h_{l-1}(x_i) + \bV_l \sigma (\bU_l h_{l-1}(x_i))) \right ]\\
\leq& \E_{\epsilon_{1:n}} \left [ \sup_{\substack{\norm{w} \leq 1,\\ h_{l-1} \in \mc H_{l-1}}} \sum_{i=1}^n \epsilon_i w^T h_{l-1}(x_i) \right ]
+
\underbrace{\E_{\epsilon_{1:n}} \left [ \sup_{\substack{\norm{w} \leq 1,\\ h_{l-1} \in \mc H_{l-1}}} \sup_{\substack{\nfro{\bV_l} \leq M_l \\ \nfro{\bU_l} \leq M_l}}  \sum_{i=1}^n \epsilon_i w^T \bV_l \sigma (\bU_l h_{l-1}(x_i)) \right ]}_{\eqdef \mathscr{A}}.
\end{align*}
The first term in RHS is $n \mathscr{\what R}_n ( \mc F_{l-1} \vert_{S})$ by definition. It is left to show an upper bound for the second term in RHS, which we will call $\mathscr{A}$.

First, because $\norm{w} \leq 1$ and $\norm{\bV_l} \leq \nfro{\bV_l} \leq M_l$, we have $\norms{\bV_l^T w} \leq M_l$. So, by using dual norm,
\begin{align*}
\mathscr{A}
= \E \left [ \sup_{\substack{\norm{v} \leq M_l,\\ \nfro{\bU_l} \leq M_l \\ h_{l-1} \in \mc H_{l-1}}} v^T \sum_{i=1}^n \epsilon_i \sigma (\bU_l h_{l-1}(x_i)) \right ]
= M_l \E \left [ \sup_{\substack{\nfro{\bU_l} \leq M_l,\\ h_{l-1} \in \mc H_{l-1}}} \norm{\sum_{i=1}^n \epsilon_i \sigma (\bU_l h_{l-1}(x_i))} \right ].
\end{align*}
Let $u_1^T, u_2^T, \dots, u_k^T$ be the rows of $\bU_l$. Then, by positive homogeneity of ReLU $\sigma$, we have 
\begin{align*}
\norm{\sum_{i=1}^n \epsilon_i \sigma (\bU_l h_{l-1}(x_i))}^2 
= \sum_{j=1}^k \norm {u_j}^2 \left ( \sum_{i=1}^n \epsilon_i \sigma \left (\frac{u_j^T h_{l-1}(x_i)}{\norm{u_j}} \right ) \right )^2.
\end{align*}
The supremum of this quantity over $u_1, \dots, u_k$ under the constraint that $\nfro{\bU_l}^2 = \sum_{j=1}^k \norm{u_j}^2 \leq M_l^2$ is attained when $\norm{u_j} = M_l$ for some $j$ and $\norm{u_{j'}} = 0$ for all other $j' \neq j$. This means that
\begin{align*}
\frac{\mathscr{A}}{M_l} 
&= \E \left [ \sup_{\substack{\nfro{\bU_l} \leq M_l,\\ h_{l-1} \in \mc H_{l-1}}} \norm{\sum_{i=1}^n \epsilon_i \sigma (\bU_l h_{l-1}(x_i))} \right ]
= \E \left [ \sup_{\substack{\norm{u} \leq M_l,\\ h_{l-1} \in \mc H_{l-1}}} \left | \sum_{i=1}^n \epsilon_i \sigma (u^T h_{l-1}(x_i)) \right | \right ]\\
&= \E \left [ \sup_{\substack{\norm{u} \leq M_l,\\ h_{l-1} \in \mc H_{l-1}}} \pospart{\sum_{i=1}^n \epsilon_i \sigma (u^T h_{l-1}(x_i))}
+ \negpart{\sum_{i=1}^n \epsilon_i \sigma (u^T h_{l-1}(x_i))} \right ]\\
&\leq \E \left [ \sup_{\substack{\norm{u} \leq M_l,\\ h_{l-1} \in \mc H_{l-1}}} \pospart{\sum_{i=1}^n \epsilon_i \sigma (u^T h_{l-1}(x_i))} \right ]
+ \E \left [ \sup_{\substack{\norm{u} \leq M_l,\\ h_{l-1} \in \mc H_{l-1}}} \negpart{\sum_{i=1}^n \epsilon_i \sigma (u^T h_{l-1}(x_i))} \right ]\\
&\stackrel{\mathclap{\scriptsize\mbox{(a)}}}{=} 2\E \left [ \sup_{\substack{\norm{u} \leq M_l,\\ h_{l-1} \in \mc H_{l-1}}} \pospart{\sum_{i=1}^n \epsilon_i \sigma (u^T h_{l-1}(x_i))} \right ]
\stackrel{\mathclap{\scriptsize\mbox{(b)}}}{=} 2\E \left [ \pospart{ \sup_{\substack{\norm{u} \leq M_l,\\ h_{l-1} \in \mc H_{l-1}}} \sum_{i=1}^n \epsilon_i \sigma (u^T h_{l-1}(x_i))} \right ]\\
&\stackrel{\mathclap{\scriptsize\mbox{(c)}}}{=} 2\E \left [ \sup_{\substack{\norm{u} \leq M_l,\\ h_{l-1} \in \mc H_{l-1}}} \sum_{i=1}^n \epsilon_i \sigma (u^T h_{l-1}(x_i)) \right ]
\stackrel{\mathclap{\scriptsize\mbox{(d)}}}{\leq}  2\E \left [ \sup_{\substack{\norm{u} \leq M_l,\\ h_{l-1} \in \mc H_{l-1}}} \sum_{i=1}^n \epsilon_i u^T h_{l-1}(x_i) \right ],
\end{align*}
where equality (a) is due to symmetry of Rademacher random variables and (b) uses $\sup \pospart{t} = \pospart{\sup t}$. Equality (c) uses the fact that the supremum is nonnegative, because setting $u = 0$ already gives $\sum_{i=1}^n \epsilon_i \sigma (u^T h_{l-1}(x_i)) = 0$. Inequality (d) uses contraction property of Rademacher complexity.

Lastly, one can notice that
\begin{align*}
\E \left [ \sup_{\substack{\norm{u} \leq M_l,\\ h_{l-1} \in \mc H_{l-1}}} \sum_{i=1}^n \epsilon_i u^T h_{l-1}(x_i) \right ]
= M_l \E \left [ \sup_{\substack{\norm{w} \leq 1,\\ h_{l-1} \in \mc H_{l-1}}} \sum_{i=1}^n \epsilon_i w^T h_{l-1}(x_i) \right ] = M_l n \mathscr{\what R}_n ( \mc F_{l-1} \vert_{S}).
\end{align*}
This establishes
\begin{equation*}
\mathscr{A} \leq 2 M_l^2 n \mathscr{\what R}_n ( \mc F_{l-1} \vert_{S}),
\end{equation*}
which leads to the conclusion that
\begin{equation*}
\mathscr{\what R}_n ( \mc F_{l} \vert_{S}) \leq (1 + 2 M_l^2) \mathscr{\what R}_n ( \mc F_{l-1} \vert_{S}),
\end{equation*}
as desired.

\end{document}